\documentclass[lettersize,journal]{IEEEtran}

\IEEEoverridecommandlockouts          
\usepackage{mathtools}
\usepackage{gensymb}
\usepackage[export]{adjustbox}
\usepackage{color}
\usepackage{graphicx}
\usepackage{booktabs}
\usepackage{epsfig} 
\usepackage{times} 
\usepackage{amsmath} 
\usepackage{amssymb}
\usepackage{amsthm}
\usepackage{leftidx}
\usepackage{stfloats}
\usepackage[noadjust]{cite}
\newtheorem{assumption}{Assumption}
\newtheorem{property}{Property}
\newtheorem{proposition}{Proposition}

\newtheorem{theorem}{Theorem}[section]
\newtheorem{corollary}{Corollary}[theorem]
\newtheorem{lemma}[theorem]{Lemma}
\newtheorem{remark}{Remark}

\usepackage{siunitx}
\usepackage{nomencl}
\makenomenclature
\usepackage{hyperref}
\usepackage{csquotes}

\usepackage{enumitem}
\usepackage{tablefootnote}
\usepackage{ushort}

\usepackage{algorithm}
\usepackage{algorithmic}

\usepackage{stackengine}
\usepackage{multirow}  

\usepackage[caption=false,font=footnotesize]{subfig}

\title{\LARGE \bf
Physics-Aware Sparse Learning and Selective Online Adaptation for Euler--Lagrange Robot Dynamics
}
\author{Rishabh Dev Yadav$^{1,3}$, Samaksh Ujjawal$^{2}$, Sihao Sun$^{3}$,  Spandan Roy$^{2}$, Wei Pan$^{4}$
\thanks{$^{1}$ Department of Computer Science, The University of Manchaster, U.K. ({\tt \footnotesize  rishabh.yadav@postgrad.manchaster.ac.uk) } }
\thanks{$^{2}$  International Institute of Information Technology Hyderabad. {\tt \footnotesize (samaksh.ujjawal@research.iiit.ac.in, \ spandan.roy@iiit.ac.in}).  }
\thanks{$^{3}$  Department of Cognitive Robotics, Delft University of Technology,
Netherlands  {\tt \footnotesize (s.sun-2@tudelft.nl}).  }
\thanks{$^{4}$  Newcastle University, UK. {\tt \footnotesize (wei.pan2@newcastle.ac.uk}).  }
}

\begin{document}

\maketitle
\thispagestyle{empty}
\pagestyle{empty}

\begin{abstract}
Accurate dynamics models are essential for model-based robotic control, yet nominal Euler--Lagrange models often become inaccurate in the presence of payload variation, unmodeled coupling, friction, aerodynamic effects, and changing operating conditions. Most learning-based correction methods improve prediction accuracy by introducing a single additive residual, but do not preserve the internal mechanical structure of Euler--Lagrange systems. This leads to models that do not preserve symmetry, positive-definiteness, or the coupling between inertia and velocity-dependent terms, which can result in physically inconsistent predictions and reduced reliability when embedded in model-based controllers. We propose a structure-preserving residual learning framework that decomposes model mismatch into an inertia correction, the corresponding induced Coriolis term, and a generalized-force residual. The mechanical component is learned under physical constraints, while the disturbance-sensitive component is represented through a sparse history-dependent latent interaction model and adapted online using Bayesian linear regression. This separation preserves key mechanical structure while restricting adaptation to the part of the dynamics most affected by changing conditions. Experiments across multiple robotic platforms, including mobile, aerial, and manipulator systems, show that the proposed method improves dynamics prediction and trajectory tracking under coupled and time-varying dynamics. These results highlight the value of combining structured residual modeling, compact latent interaction selection, and selective online adaptation for real-world model-based control.
\end{abstract}

\section{Introduction}
Accurate dynamics, typically based on Euler--Lagrange (EL) models, is fundamental to model-based robotic control. However, these first-principles models are rarely precise in practice due to various unmodeled effects such as payload variation, intra-system coupling, friction, aerodynamic effects and disturbances. Learning-based dynamics correction has therefore become an effective way to improve nominal models without discarding their analytical backbone \cite{duong21hamiltonian,saviolo2023learning, lee2024robot,jin2025learning}. Despite strong empirical results, most existing methods are designed primarily to reduce prediction error rather than to preserve the internal EL mechanical structure of the system; even when using temporal context, physical priors, or online adaptation, they typically represent model mismatch as a single additive residual term, thereby collapsing heterogeneous unmodeled effects into an undifferentiated component \cite{scampicchio2025gaussian,miao2025residual,saviolo2022physics,mei2025fast, wang2023neural}.

This is a nontrivial limitation for EL systems, where mismatch does not arise as an arbitrary unstructured correction but through perturbations of mechanically coupled terms such as inertia, Coriolis/centrifugal effects, and potential forces. Treating these effects as a generic state--input map or sequence predictor may improve local predictive accuracy, but it can violate the structural coupling, symmetry, and energy-related properties that underlie physically consistent modeling and principled control design \cite{orsag2018aerial,spong2008robot}. As a result, such models are harder to interpret, less reliable under extrapolation, and more difficult to analyze when embedded in model-based controllers. These limitations motivate residual learning methods that preserve the structured mechanical form of EL mismatch rather than absorbing all discrepancies into a black-box correction \cite{lutter2019deep,gupta2020structured}.

As illustrated in Fig.~\ref{fig:coupling}, real-world systems exhibit significant and time-varying coupling, while many state interactions remain consistently weak, suggesting that residual dynamics are structured and partially sparse.
Moreover, preserving structure alone is insufficient for real robotic deployment, where dominant model mismatch varies with operating conditions. This makes online adaptation necessary, especially when the learned model is embedded within a model-based controller that must remain accurate under nonstationary dynamics \cite{kaushik2020fast,sun2021online,jiahao2023online}. At the same time, residual dynamics are often governed by only a limited number of active interaction patterns, making a parsimonious representation desirable even before adaptation is considered \cite{liu2023model, brunton2025machine}. However, adapting large learned models online is often computationally expensive and data-inefficient, motivating practical update mechanisms that are local, sparse, or otherwise compact \cite{wilcox2020solar,watson2022learning}. More broadly, parsimonious dynamics representations suggest that compact models can improve efficiency and control compatibility without requiring uniformly dense parameterizations \cite{liu2023model,purnomo2023sparse}. These considerations point to a more targeted objective: not generic online adaptation, but selective adaptation that updates only the disturbance-sensitive component while preserving its mechanically structured component.

To address these challenges, we propose a structure-preserving residual learning framework for Euler--Lagrange robotic systems that combines mechanically consistent residual decomposition, parsimonious history-dependent interaction modeling, and selective online adaptation. The proposed model separates residual mismatch into a structured mechanical component and a disturbance-sensitive generalized-force component, so that key Euler--Lagrange structure is preserved while adaptation remains compact and responsive under changing operating conditions. This design yields a residual model that is physically grounded where analytical structure matters and flexibly adaptive where real-world uncertainty is most pronounced. The main contributions of this work are summarized as follows:

\begin{enumerate}
\item \textbf{Structure-preserving residual decomposition:} We formulate residual learning for Euler--Lagrange systems through a decomposition into an inertia correction, the corresponding induced Coriolis/velocity-dependent term, and a generalized-force residual, thereby preserving key mechanical structure instead of learning a single undifferentiated residual.
\item \textbf{Parsimonious history-dependent interaction modeling:} We introduce a sparse latent representation of recent system history that captures a limited number of active interaction patterns, enabling compact modeling of both structured mechanical mismatch and disturbance-related effects.
\item \textbf{Selective Bayesian online adaptation:} We develop an online adaptation mechanism that updates only the disturbance-sensitive generalized-force component within the active latent subspace, while keeping the mechanically structured component fixed.
\end{enumerate}
We demonstrate improved predictive accuracy and trajectory tracking across multiple robotic platforms operating under coupled and changing dynamics.

\begin{figure*}[]
\centering
\vspace{-2mm}
\includegraphics[width=0.99\linewidth]{figures/coupling.png}
    \caption{
    \textbf{Time-varying state coupling in an aerial manipulator system.}
    Three panels from left side show the \emph{instantaneous state coupling} matrices at three representative time instants $t_1$, $t_2$, and $t_3$. 
    The coupling matrix is computed using the Pearson correlation coefficient between state trajectories.
    Rightmost panel shows the \emph{negligible coupling fraction}, defined as $
        \eta_{ij} = \frac{1}{T} \sum_{k=1}^{T} 
        \mathbf{1} \left( |\rho_{ij}(t_k)| < \epsilon \right)$,
    where $\mathbf{1}(\cdot)$ is the indicator function and $\epsilon$ is a small threshold (here $\epsilon = 0.08$). 
    This metric quantifies how frequently a pair of states remains weakly coupled over time.
    The results reveal three key properties: 
    (i) significant off-diagonal structure confirms strong coupling between platform and manipulator dynamics, 
    (ii) coupling patterns vary over time due to changing configurations and dynamic interactions, and 
    (iii) a subset of state pairs remains consistently negligible, indicating potential sparsity that can be exploited for structured modeling and control.
    } \label{fig:coupling}
\end{figure*}

\section{Related Work}
Additive residual learning approaches \cite{bauersfeld2021neurobem, cao2024computation, shi2019neural, djeumou2023learn, kim2025modular, das2025dronediffusion}, as well as full-model learning methods \cite{saviolo2022physics, giacomuzzo2024black}, are typically trained offline on fixed datasets without explicitly decomposing Euler--Lagrange dynamics. As a result, they are poorly suited to time-varying operating conditions, where distribution shift can accumulate into prediction error and degrade closed-loop control performance in the absence of online adaptation \cite{lambert2022investigating}.

Structure-aware neural dynamics models such as \cite{lutter2019deep, gupta2020structured, gu2024physics, liu2024physics} learn full system models by embedding Lagrangian, Hamiltonian, or manipulator structure directly into the learned dynamics, improving physical plausibility and sample efficiency. However, these methods are generally trained as fixed models and do not primarily address online adaptation under changing conditions. Among them, \cite{lutter2019deep} is notable in enabling online-capable Euler--Lagrange-structured learning, but its formulation does not explicitly model non-conservative effects such as friction, drag, or actuator-induced disturbances. Physically consistent identification methods \cite{lutter2023differentiable, khorshidi2025physically} instead infer global physical parameters, such as mass, inertia, and friction, which limits their ability to handle payload changes, contact variations, or external disturbances that evolve during deployment.

To address nonstationary dynamics, several methods update learned models during deployment. Some approaches adapt only the final layer of a neural dynamics model using online gradient-based updates \cite{saviolo2023active}, but this requires careful tuning of learning rate and batch size to remain stable and responsive under noisy data. Others rely on meta-learned low-dimensional adaptation, such as basis-weight updates or embedding selection \cite{o2022neural,kaushik2020fast}, which improves data efficiency but ties online adaptation to the expressiveness of a pre-trained family of representations. Nonparametric and system-identification approaches, including sparse/local GP models and online differentiable simulation \cite{wilcox2020solar,watson2022learning,chen2022real,sun2021online}, provide effective online refinement but typically adapt generic local models or global parameter blocks rather than separating fast-changing disturbance effects from the mechanically structured part of the dynamics. In contrast, our approach updates only the generalized-force component, while keeping the inertia and Coriolis corrections fixed.

Beyond structure and adaptivity, compactness also matters when learned dynamics are used inside control loops. Sparse modeling has therefore been used in system identification to select relevant dynamics terms and reduce overparameterization \cite{zhou2022sparse}, as well as to recover parsimonious governing equations or Lagrangian descriptions from data \cite{brunton2016discovering,purnomo2023sparse,zhang2026differentiable}. In control-oriented learning, sparsification has also been used to simplify learned models so that prediction and optimization remain computationally tractable \cite{liu2023model,brunton2025machine}. These works highlight the value of compact representations, but primarily target model discovery, parameter selection, or model simplification. In contrast, we use sparsity to represent residual dynamics through a small number of active interaction patterns within a structure-preserving formulation, enabling a compact model for efficient and selective online adaptation.

The above discussion highlights a gap between structure preservation, online adaptation, and compact modeling. Existing methods either learn unstructured residuals, enforce structure without selective adaptation, or adapt models without separating mechanical and disturbance effects. We address this by proposing a structure-preserving residual framework for Euler--Lagrange systems that separates these components, enabling selective adaptation of the generalized-force term while maintaining a compact and physically consistent model.

\section{Methodology} \label{sec:methodology}

\subsection{Problem Formulation}

We consider a mechanical system described by Euler--Lagrange (EL) dynamics as 
\begin{equation}
M(\chi)\ddot{\chi}
+
C(\chi,\dot{\chi})\dot{\chi}
+
f(\chi,\dot{\chi})
=
\tau,
\label{eq:EL_general}
\end{equation}
where \(
\chi \in \mathbb{R}^{n}
\)
denote the generalized coordinates, \(M(\chi) \in \mathbb{R}^{n \times n}\) is the inertia matrix, \(C(\chi,\dot{\chi}) \in \mathbb{R}^{n \times n}\) is the Coriolis/centrifugal matrix, \(f(\chi,\dot{\chi}) \in \mathbb{R}^{n}\) collects gravity and non-conservative forces, and  \(\tau \in \mathbb{R}^{n}\)  is the generalized input.
Standard properties for Euler–Lagrange systems hold~\cite{spong2008robot}:
\begin{property}
\label{prop_1}
The inertia matrix is uniformly positive definite $\forall \chi$, i.e.,
$ \underline{m}I \preceq M(\chi) \preceq \overline{m}I $, 
for some constants $0 < \underline{m} \le \overline{m}$.
\end{property}

\begin{property}
\label{prop_2}
The matrix $\dot{M}(\chi) - 2C(\chi,\dot{\chi})$ is skew symmetric.
\end{property}

A nominal model \(\bar M(\chi), \bar C(\chi,\dot{\chi}), \bar f(\chi,\dot{\chi})\) is assumed available, typically derived from first principles, where $\bar M(\chi)$ is assumed to be symmetric positive definite. Due to unmodeled dynamics, parameter uncertainty, and environment-dependent effects, the true system deviates from this nominal description. We therefore write
\begin{equation}
\big(\bar M + \Delta M\big)\ddot{\chi}
+
\big(\bar C + \Delta C\big)\dot{\chi}
+
\bar f + \Delta f
=
\tau,
\label{eq:EL_residual}
\end{equation}
where \(\Delta M(\cdot)\), \(\Delta C(\cdot)\), and \(\Delta f(\cdot)\) denote unknown corrections.
The notation $(.)_t$ indicates the value of a variable at time $t$. At time step \(t\), defining the target residual dynamics 
\begin{equation}
\delta_t = \tau_t - \big( \bar M(\chi_t)\ddot{\chi}_t + \bar C(\chi_t,\dot{\chi}_t)\dot{\chi}_t +
\bar f(\chi_t,\dot{\chi}_t)
\big),
\label{eq:residual_target_general}
\end{equation}
we obtain residual model
\begin{equation}
\delta_t = \Delta M_t \ddot{\chi}_t + \Delta C_t \dot{\chi}_t + \Delta f_t.
\label{eq:structured_residual_general}
\end{equation}

The objective is to learn a structured residual model \eqref{eq:structured_residual_general} that preserves the mechanical properties of Euler--Lagrange systems while enabling adaptation to time-varying effects. In particular, the corrected inertia must remain symmetric positive definite, velocity-dependent coupling must remain consistent with the inertia rather than being modeled independently, and the generalized force component must admit efficient online adaptation. To achieve this, we construct a history-dependent latent representation, enforce sparse activation of interaction modes, parameterize structured corrections to inertia, and restrict online adaptation to the generalized force component.

\subsection{Temporal History Encoder}

The residual dynamics in~\eqref{eq:structured_residual_general} depends not only on the instantaneous state but also on recent system evolution. To capture these short-horizon dependencies, we encode a finite history of observations.
At time step \(t\), define the per-step observation
\(
\zeta_t
=
\begin{bmatrix}
\chi_t^\top &
\dot{\chi}_t^\top &
\ddot{\chi}_t^\top &
\tau_t^\top
\end{bmatrix}^\top
\in \mathbb{R}^{4n},
\)
and construct a history window of length \(K\):
\begin{equation}
H_t = \big[\zeta_{t-K+1}, \dots, \zeta_t\big]^\top \in \mathbb{R}^{K \times 4n}.
\label{eq:history_tensor_general}
\end{equation}

The history tensor \(H_t\) is mapped to a latent representation using a temporal encoder
\begin{equation}
\tilde z_t = \phi(H_t) \in \mathbb{R}^{d_z},
\label{eq:dense_latent_general}
\end{equation}
where \(\phi(\cdot)\) is implemented as a lightweight temporal convolutional network operating along the time dimension.

The latent feature \(\tilde z_t\) provides a compact, history-dependent representation of system evolution. It is subsequently used to extract sparse latent interaction modes for structured residual modeling.

\subsection{Sparse Latent Interaction Selection}

The latent representation \(\tilde z_t\) encodes a dense set of history-dependent features, but residual dynamics are typically governed by a small subset of active interaction modes at any given time. To exploit this structure, we extract sparse latent representations that selectively activate 
a subset of latent interaction components. Each latent dimension corresponds to a learned interaction component capturing a distinct pattern of residual dynamics (e.g., configuration-dependent coupling, velocity-dependent effects, or disturbance signatures).

We define two branch-specific sparse codes from \(\tilde z_t\):
\begin{equation}
z_t^{M} = \mathcal{S}_{\lambda_M}(\tilde z_t),
\qquad
z_t^{f} = \mathcal{S}_{\lambda_f}(\tilde z_t),
\label{eq:sparse_branches_general}
\end{equation}
where \(z_t^{M}, z_t^{f} \in \mathbb{R}^{d_z}\), and \(\mathcal{S}_{\lambda}(\cdot)\) is the elementwise soft-thresholding operator
\begin{equation}
\big[\mathcal{S}_{\lambda}(v)\big]_i
=
\mathrm{sign}(v_i)\max\{|v_i|-\lambda,\,0\}, 
\quad i=1,\dots,d_z.
\label{eq:soft_threshold_general}
\end{equation}
The soft-thresholding operator is non-differentiable at zero; in practice, training is performed using subgradient methods, which are standard in sparse representation learning.

The structural branch \(z_t^{M}\) parameterizes corrections to the inertia matrix, while the adaptive branch \(z_t^{f}\) drives the generalized force model. 
Sparsity is activation-dependent: the support of \(z_t^{M}\) and \(z_t^{f}\) varies with the input history. For the adaptive branch, we define the active index set
\begin{equation}
S_t = \mathrm{supp}(z_t^{f}) = \{\, i \mid (z_t^{f})_i \neq 0 \,\},
\label{eq:active_support_general}
\end{equation}
i.e., the set of active latent components at time $t$. Here, $\mathrm{supp}(v)$ denotes the support of a vector $v$, defined as the index set of its nonzero entries. Only the coordinates in $S_t$ contribute to prediction and are updated during online adaptation.

This construction yields a compact representation of residual dynamics in terms of a small number of active latent modes. The structural branch promotes parsimonious parameterization of inertia corrections, while the adaptive branch defines a low-dimensional, time-varying subspace for efficient online updates.

\subsection{Positive-Definite Parameterization of the Inertia Matrix}

For Euler--Lagrange systems, the inertia matrix must remain symmetric positive definite (SPD) for all admissible configurations. We enforce this constraint by parameterizing the correction as a structured positive-definite matrix composed of a low-rank term and a strictly positive diagonal component.

The corrected inertia matrix is defined as
\begin{equation}
\hat M(\chi_t, z_t^M)
=
\bar M(\chi_t)
+
\Delta M_{\theta_M}(\chi_t, z_t^M),
\label{eq:corrected_inertia_general}
\end{equation}
where \(\bar M(\chi_t) \in \mathbb{R}^{n \times n}\) is the nominal inertia matrix (assumed SPD), and \(\Delta M_{\theta_M}(\cdot)\) is a learned correction driven by the sparse structural latent code \(z_t^M \in \mathbb{R}^{d_z}\).

\paragraph{Low-rank and diagonal parameterization.}
We use a single neural decoder
\begin{equation}
\big[b(\chi_t, z_t^M),\; d(\chi_t, z_t^M)\big]
=
g_{\theta_M}(\chi_t, z_t^M),
\end{equation}
where \(g_{\theta_M}(\cdot)\) is a multilayer perceptron with output dimension \(nr + n\). The vector \(b(\cdot) \in \mathbb{R}^{nr}\) is reshaped as
\begin{equation}
B(\chi_t, z_t^M)
=
\mathrm{mat}\big(b(\chi_t, z_t^M)\big)
\in \mathbb{R}^{n \times r},
\end{equation}
and \(d(\cdot) \in \mathbb{R}^{n}\) parameterizes a diagonal matrix.

The inertia correction is defined as
\begin{equation}
\Delta M_{\theta_M}(\chi_t, z_t^M)
=
B(\chi_t, z_t^M)\,B(\chi_t, z_t^M)^\top
+
D(\chi_t, z_t^M),
\label{eq:learned_deltaM}
\end{equation}
where $
D(\chi_t, z_t^M)
=
\mathrm{diag}\!\big(
\mathrm{softplus}(d(\chi_t, z_t^M))
\big)$.

\paragraph{Positive-definiteness.}
The term \(B B^\top\) is symmetric positive semidefinite, and \(D\) is diagonal with strictly positive entries. Therefore,
\begin{equation}
\Delta M_{\theta_M}(\chi_t, z_t^M) \succ 0,
\quad
\hat M(\chi_t, z_t^M) \succ 0 \;\; \forall (\chi_t, z_t^M).
\end{equation}

This parameterization guarantees symmetry and positive definiteness by construction. The low-rank factor \(B \in \mathbb{R}^{n \times r}\) captures coupling effects across generalized coordinates, while the diagonal term ensures numerical conditioning and models independent inertial scaling. Since the correction is driven by the sparse latent code \(z_t^M\), only a subset of interaction modes is active at each time step, yielding a parsimonious and well-structured representation of inertial variations.

\subsection{Structure-Preserving Coriolis Matrix Construction}
\label{sec:coriolis_structure_general}

In Euler--Lagrange systems, the Coriolis and centrifugal terms are determined by the inertia matrix and are not independent. To preserve this coupling, we construct the Coriolis matrix directly from the corrected inertia \(\hat M(\chi_t, z_t^M)\) in~\eqref{eq:corrected_inertia_general} rather than learning a separate residual.
 The Christoffel symbols of the first kind are given by
\begin{equation}
\Gamma_{ijk}(\chi_t, z_t^M)
=
\frac{1}{2}
\left(
\frac{\partial \hat M_{ij}}{\partial \chi_k}
+
\frac{\partial \hat M_{ik}}{\partial \chi_j}
-
\frac{\partial\hat  M_{jk}}{\partial \chi_i}
\right),
\label{eq:christoffel_symbols_general}
\end{equation}
for \(i,j,k = 1,\dots,n\). The Coriolis matrix is then defined as
\begin{equation}
\hat C_{ij}(\chi_t,\dot{\chi}_t, z_t^M)
=
\sum_{k=1}^{n}
\Gamma_{ijk}(\chi_t, z_t^M)\,\dot{\chi}_{t,k}.
\label{eq:coriolis_from_christoffel_general}
\end{equation}

This construction enforces consistency between inertial and velocity-dependent terms by deriving the Coriolis matrix directly from the learned inertia. The resulting correction is
\begin{equation}
\Delta C_{\theta_M}(\chi_t,\dot{\chi}_t, z_t^M)
=
\hat C(\chi_t,\dot{\chi}_t, z_t^M)
-
\bar C(\chi_t,\dot{\chi}_t),
\label{eq:delta_c_general}
\end{equation}
which introduces no additional parameters. Derivatives of $\hat M(\chi_t, z_t^M)$ are computed via automatic differentiation, yielding approximate preservation of the $\dot M - 2C$ skew-symmetry.

\subsection{Generalized Force Residual Modeling}

After accounting for inertial and its induced Coriolis structure, the remaining mismatch is attributed to generalized forces. These terms capture non-conservative and unmodeled effects that do not modify the kinetic energy structure.
We model the generalized force correction as 
\begin{equation} \label{eq:corrected_f}
 \hat f(\chi_t,\dot{\chi}_t,z_t^f) = \bar f(\chi_t,\dot{\chi}_t) + \Delta f_{\theta_f}(z_t^f)   
\end{equation}
using the adaptive latent code \(z_t^f\)  where
\begin{equation}
\Delta f_{\theta_f}(z_t^f)
=
\Theta z_t^f,
\qquad
\Theta \in \mathbb{R}^{n \times d_z},
\label{eq:force_decoder_general}
\end{equation}
where \(\Theta\) is the force decoder.
The use of a linear decoder follows from the role of the latent representation: the encoder \(\phi(H_t)\) captures nonlinear history-dependent interactions, while \(\Theta\) maps the resulting latent modes to generalized forces.

Combining~\eqref{eq:learned_deltaM}, \eqref{eq:delta_c_general}, and~\eqref{eq:force_decoder_general}, the predicted residual dynamics are written as
\begin{equation}
\hat{\delta}_t
=
\Delta M_{\theta_M}(\chi_t, z_t^M)\ddot{\chi}_t
+
\Delta C_{\theta_M}(\chi_t,\dot{\chi}_t, z_t^M)\dot{\chi}_t
+
\Delta f_{\theta_f}(z_t^f).
\label{eq:structured_model_general}
\end{equation}

This formulation separates residual dynamics into structured corrections driven by \(z_t^M\) and disturbance-like effects captured by \(z_t^f\), enabling selective adaptation without modifying the learned mechanical structure.

\subsection{Training Objective}

The model is trained from a dataset 
\(
\mathcal{D} = \{(H_t, \delta_t)\}_{t=1}^{N},
\)
where \(H_t\) is the history window defined in~\eqref{eq:history_tensor_general} and \(\delta_t\) is the residual target in~\eqref{eq:residual_target_general} and $\hat{\delta}_t$ predicted residual from \eqref{eq:structured_model_general}.

\paragraph{Data fidelity.}
The primary objective minimizes the residual prediction error:
\begin{equation}
\mathcal{L}_{\mathrm{data}}
=
\frac{1}{N}
\sum_{t=1}^{N}
\left\|
\delta_t - \hat{\delta}_t
\right\|_2^2.
\end{equation}

\paragraph{Sparsity regularization.}
To promote consistent activation of a small number of latent modes, we include an \(\ell_1\) penalty on the sparse codes, with $\lambda_M,\lambda_f \ge 0$:
\begin{equation}
\mathcal{L}_{\mathrm{sparse}}
=
\frac{1}{N}
\sum_{t=1}^{N}
\left(
\lambda_M \|z_t^M\|_1
+
\lambda_f \|z_t^f\|_1
\right).
\end{equation}

\paragraph{Total objective.}
\begin{equation} \label{eq:final_loss}
\mathcal{L}
=
\mathcal{L}_{\mathrm{data}}
+
\mathcal{L}_{\mathrm{sparse}}.
\end{equation}

\section{Online Adaptation} \label{sec:online_update}

To account for time-varying effects, we adapt only the generalized force decoder while keeping the inertia and Coriolis components fixed. This preserves the learned mechanical structure and confines adaptation to disturbance-sensitive terms.

\paragraph{Residual isolation.}
Using the residual target \(\delta_t\) defined in~\eqref{eq:residual_target_general}, the force residual is obtained as
\begin{equation} \label{eq:residual_isolation}
r_t
=
\delta_t
-
\Delta M_{\theta_M}(\chi_t, z_t^M)\ddot{\chi}_t
-
\Delta C_{\theta_M}(\chi_t,\dot{\chi}_t, z_t^M)\dot{\chi}_t.
\end{equation}

This yields the linear model
$r_t = \Theta z_t^f + \varepsilon_t$,
where \(\varepsilon_t\) is zero-mean noise.

\paragraph{Bayesian linear regression.}
Each output dimension is modeled independently as $
r_t^{(j)} = (z_t^f)^\top \theta_j + \varepsilon_{t,j}$,
where \(\theta_j \in \mathbb{R}^{d_z}\) is the parameter vector for the \(j\)-th output. The noise satisfies $
\varepsilon_t \sim \mathcal{N}(0, \Sigma_\varepsilon),
~~\Sigma_\varepsilon = \mathrm{diag}(\sigma_1^2,\dots,\sigma_n^2)$ for computational tractability in recursive updates.
A Gaussian posterior is maintained for each \(\theta_j\): $
\theta_j \sim \mathcal{N}(\mu_{t,j}, P_{t,j})$,
initialized from the offline-trained parameters.

\paragraph{Bayesian update.}
Each output dimension $j = 1,\dots,n$ is updated independently, where
$\mu_{t,j} \in \mathbb{R}^{d_z}$ and $P_{t,j} \in \mathbb{R}^{d_z \times d_z}$ denote the posterior mean and covariance, respectively, and $\sigma_j^2 > 0$ is the observation noise variance. Let $z_t^f \in \mathbb{R}^{d_z}$ denote the sparse latent code obtained via soft-thresholding. Updates are performed as
{\small
\begin{align}
K_{t,j} &= P_{t-1,j} z_t^f
\left((z_t^f)^\top P_{t-1,j} z_t^f + \sigma_j^2\right)^{-1}, \\
\mu_{t,j} &= \mu_{t-1,j}
+
K_{t,j}
\left(r_t^{(j)} - (z_t^f)^\top \mu_{t-1,j}\right), \nonumber \\
P_{t,j} &=
\big(I - K_{t,j} (z_t^f)^\top\big)
P_{t-1,j}
\big(I - K_{t,j} (z_t^f)^\top\big)^\top
+
\sigma_j^2 K_{t,j} K_{t,j}^\top. \nonumber
\end{align}
}
The updated decoder is obtained as $
\Theta_t = [\mu_{t,1}^\top \;\cdots\; \mu_{t,n}^\top]^\top \in \mathbb{R}^{n \times d_z}$.
Although updates are performed in the full $d_z$-dimensional space, the sparsity of $z_t^f$ implies that only a subset of latent directions contributes significantly at each time step. This induces an effective low-dimensional adaptation without requiring explicit masking or dynamic resizing of the parameter space, while remaining compatible with fixed-size tensor implementations.

\section{Control Law}

We employ a computed torque controller based on the learned dynamics model. Let \(e = \chi - \chi_d\) and \(\dot e = \dot{\chi} - \dot{\chi}_d\). The control input is defined as
\begin{equation}
\tau
=
\hat M(\chi_t,z_t^M) u
+
\hat C(\chi_t,\dot{\chi}_t,z_t^M)\dot{\chi}
+
\hat f(\chi_t,\dot{\chi}_t, z_t^f), \label{eq:control_law}
\end{equation}
where 
$u = \ddot{\chi}_d - K_d \dot e - K_p e$, with positive definite gains \(K_p, K_d\).
The learned model components are given by \eqref{eq:corrected_inertia_general}, \eqref{eq:coriolis_from_christoffel_general} and \eqref{eq:corrected_f} respectively. The learned dynamics model provides feedforward compensation of the system dynamics, while the feedback gains compensate for trajectory tracking errors due to model inaccuracies.

\begin{remark}
For underactuated systems with input dimension $m < n$, the dynamics are written as $
M(\chi)\ddot{\chi} + C(\chi,\dot{\chi})\dot{\chi} + f(\chi,\dot{\chi}) = B(\chi) \tau_{ua}$,
where $B(\chi) \in \mathbb{R}^{n \times m}$ is assumed known. During data collection, generalized forces are computed as $\tau = B(\chi)\tau_{ua}$. At control time, the desired generalized force $\tau^\star$ is mapped to actuator inputs via the Moore--Penrose pseudoinverse, $\tau_{ua} = B(\chi)^\dagger \tau^\star$,
yielding the closest realizable control input in a least-squares sense.   
\end{remark}

\begin{algorithm}[t]
\caption{Structured Residual Dynamics Learning}
\label{alg:method}

\textbf{Offline Training}
\begin{algorithmic}[1]
\STATE Construct dataset $\mathcal{D} = \{(H_t, \delta_t)\}_{t=1}^{N}$ using~\eqref{eq:history_tensor_general} and~\eqref{eq:residual_target_general}
\FOR{each training sample}
    \STATE Encode history: $\tilde z_t = \phi(H_t)$  \eqref{eq:dense_latent_general}
    \STATE Apply sparse selection: $(z_t^M, z_t^f)$  \eqref{eq:sparse_branches_general}
    \STATE Compute structured prediction $\hat{\delta}_t$  \eqref{eq:structured_model_general}
\ENDFOR
\STATE Optimize parameters using objective~\eqref{eq:final_loss}
\STATE Initialize online decoder parameters $(\mu_{0,j}, P_{0,j})$ for all $j$
\end{algorithmic}

\vspace{0.5em}
\textbf{Online Deployment}
\begin{algorithmic}[1]
\STATE Observe $(\chi_t, \dot{\chi}_t, \ddot{\chi}_t, \tau_t)$ and update history $H_t$
\STATE Encode history: $\tilde z_t = \phi(H_t)$
\STATE Apply sparse selection: $(z_t^M, z_t^f)$
\STATE Compute control input $\tau_t$ using learned model \eqref{eq:control_law}
\STATE Observe next state and compute residual $\delta_t$  \eqref{eq:residual_target_general}
\STATE Compute force residual $r_t$ \eqref{eq:residual_isolation} 
\STATE Extract active set $S_t = \mathrm{supp}(z_t^f)$  \eqref{eq:active_support_general}
\STATE Update decoder parameters $(\mu_{t,j}, P_{t,j})$ sec. \ref{sec:online_update}
\STATE Update $\Theta_t$ from posterior means
\end{algorithmic}
\end{algorithm}

\section{Experiments}

We compared our method against several established baselines. For model prediction performance, we consider sparse system identification (SysID)~\cite{brunton2016discovering}, physics-inspired temporal convolutional networks (PI-TCN)~\cite{saviolo2022physics}, and DMRAC~\cite{joshi2021asynchronous}, an encoder--decoder-based approach. 
For trajectory tracking performance, we compare against deep neural networks (DNN)~\cite{shi2019neural}, diffusion-based models~\cite{das2025dronediffusion}, meta-learning approaches~\cite{o2022neural}, and active MLPs~\cite{saviolo2023active}, which adapt only the final layer during online execution.
All methods are used to learn residual dynamics models and are integrated with the same controller \eqref{eq:control_law} to ensure a fair comparison, isolating the effect of the model learning approach. Additionally, we provide an ablation study to evaluate the role of sparsity in the proposed method. All models are trained and evaluated under identical conditions across 10 independent trials.

\subsection{Hardware Setup}
We evaluate the proposed method on five robotic platforms: a differential-drive mobile robot, a 6-searial DoF manipulator, a quadrotor, a mobile manipulator with a holonomic base with 6-DoF arm and an aerial manipulator with a 2-DoF serial arm. All systems are equipped with an onboard NVIDIA Jetson Orin NX for real-time computation, and the same model architecture and training procedure are used across platforms.
Each system is described by generalized coordinates \(\chi\) and control inputs \(\tau \), with dimensions summarized in Table~\ref{tab:system_dimension}. High-level torque or force commands are generated at \(70\) Hz using the learned model and controller, while onboard low-level controllers convert these commands into actuator inputs.

Nominal dynamics models are assumed known for all platforms. For the mobile manipulator and aerial manipulator, the nominal model is constructed by treating the base and manipulator as decoupled subsystems, thereby neglecting coupling effects. 
State feedback is obtained from onboard sensing and external motion capture and filtered accelerations are computed numerically where sensor feedback not available. All experiments are conducted in closed-loop with real-time inference running onboard.

\begin{table}[t]
\centering
\footnotesize
\caption{System dimensions for different platforms.}
\label{tab:system_dimension}
\begin{tabular}{lccccc}
\toprule
\textbf{Dim.} 
& \textbf{RP1} 
& \textbf{RP2} 
& \textbf{RP3} 
& \textbf{RP4} 
& \textbf{RP5} \\
\midrule
$\chi$ 
& 3 & 6 & 6 & 8 & 9 \\

$\tau$ 
& 2 & 6 & 4 & 6 & 8 \\
\bottomrule
\end{tabular}

\vspace{0.3em}
\raggedright
{\scriptsize $\chi$ and $\tau$ denotes system state and control input. RP1: Mobile Robot, RP2: Manipulator, RP3: Quadrotor, RP4: Mobile Manipulator, RP5: Aerial Manipulator.}
\end{table}

\subsection{Data Collection}
Training data are collected by executing smooth, randomly generated trajectories using a PID controller to excite a wide range of system dynamics. The trajectories vary position, velocity, and actuation profiles over time, and $5$ minutes of data are recorded at 100\,Hz, forming a time-indexed dataset  
$ \left[ \zeta_1, \zeta_2, \dots) \right]^\top,
$ sampled at timestamps \( \{t_1, t_2, \dots\} \). 
Data are collected with zero and two different payload, while evaluation is performed under unseen conditions, including payload variations, novel trajectory and changes in operating regimes. This setup enables the learned model to capture residual effects such as friction, coupling, mass variation, and aerodynamic disturbances.

\subsection{Implementation Details}

The proposed model is implemented in JAX and weights are initialized using Xavier initialization. The same model architecture and training configuration are used across all platforms.
The encoder and structural decoder parameters are optimized jointly, while the generalized force decoder is initialized from offline training. A fixed history length is used to construct input sequences, and training is performed with mini-batch stochastic gradient descent.
During deployment, only the generalized force decoder is updated using Bayesian linear regression.
A computed torque controller is used with gains \(K_p\) and \(K_d\), which are selected independently for each platform to ensure stable tracking performance.  The parameter details for offline, online and controller is provided in Table~\ref{tab:impl_details}, \ref{tab:online_details} and \ref{tab:control_parameter}.

The desired roll and pitch of the quadrotor and aerial manipulator are calculated following (\cite{mellinger2011minimum}). 
For all learned models, the input features consist of the current state $\chi_t$, $\dot{\chi}_t$, $\ddot{\chi}_t$ and the previous control input $\tau_{t-1}$, since $\tau_t$ is not available at prediction time~(\cite{shi2019neural}).

\begin{table}[t]
\centering
\footnotesize
\caption{Model and training configuration (shared across platforms)}
\label{tab:impl_details}
\begin{tabular}{l c}
\toprule
\textbf{Parameter} & \textbf{Value} \\
\midrule

\multicolumn{2}{c}{\textit{Temporal Encoder}} \\
History length ($K$) & 5 \\
Encoder type & Temporal CNN (2 layers) \\
Kernel size & 3 \\
Hidden dimension & 64 \\
Activation & ELU $\in C^1(\mathbb{R})$ \\

\midrule
\multicolumn{2}{c}{\textit{Latent Representation}} \\
Latent dimension ($d_z$) & 16 \\
Sparsity thresholds ($\lambda_M, \lambda_f$) & 0.01, 0.01 \\
Soft-threshold ($\lambda$) & 0.2 \\

\midrule
\multicolumn{2}{c}{\textit{Inertia Decoder}} \\
MLP layers & 2 \\
Hidden dimension & 64 \\
Activation & ELU \\
Low-rank dimension ($r$) & $\left\lceil \frac{3n}{4} \right\rceil$ \\

\midrule
\multicolumn{2}{c}{\textit{Training}} \\
Optimizer & Adam \\
Learning rate & $1 \times 10^{-3}$ \\
Batch size & 256 \\
Training epochs & 100 \\
Weight decay & $1 \times 10^{-5}$ \\

\bottomrule
\end{tabular}
\end{table}

\begin{table}[t]
\centering
\footnotesize
\caption{Online adaptation parameters}
\label{tab:online_details}
\begin{tabular}{l c}
\toprule
\textbf{Parameter} & \textbf{Value} \\
\midrule

Initial mean ($\mu_{0,j}$) & offline trained weights \\
Initial covariance ($P_{0,j}$) & $0.1\,I$ \\
Observation noise ($\sigma_j^2$) & $1 \times 10^{-3}$ \\

\bottomrule
\end{tabular}
\end{table}

\begin{table}[t]
\centering
\footnotesize
\caption{Controller gains for different platforms}
\begin{tabular}{lcc}
\toprule
\textbf{Platform} & $K_p$ & $K_d$ \\
\midrule
Mobile Robot & 4*diag(I) & 2*diag(I) \\
Manipulator & 6*diag(I) & 2*diag(I) \\
Quadrotor & 3*diag(I) & 2*diag(I) \\
Aerial Manipulator & 4*diag(I) & 3*diag(I) \\
Mobile Manipulator & 6*diag(I) & 3*diag(I) \\
\bottomrule
\end{tabular} \label{tab:control_parameter}
\end{table}

\subsection{Experimental Scenarios}
\begin{figure*}[]
\centering
\vspace{-2mm}
\includegraphics[width=0.99\linewidth]{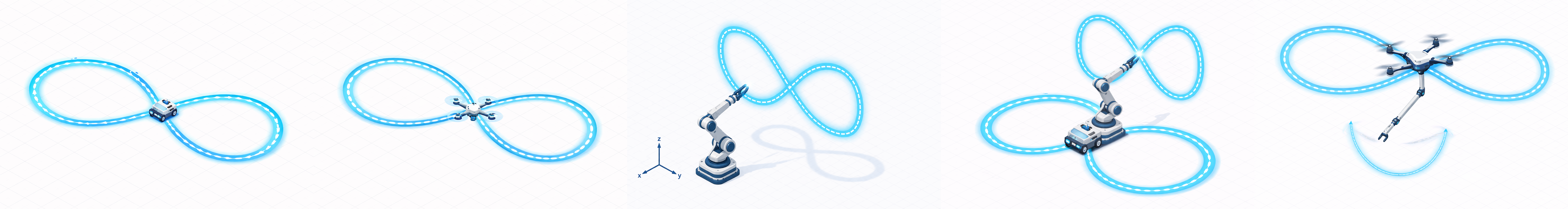}
\caption{Experimental scenarios across robotic platforms. Each system is tasked with tracking a figure-eight (infinity-shaped) trajectory.}
    \label{fig:experiment_scenerio}
\end{figure*}

We evaluate the proposed method on multiple robotic platforms under dynamically rich reference trajectories designed to excite coupled system dynamics. The trajectories for each platform are illustrated in Fig.~\ref{fig:experiment_scenerio}.
The ground mobile robot and quadrotor are commanded to follow a planar figure-eight (infinity-shaped) trajectory in the horizontal ($xy$) plane. For the fixed-base manipulator, the end-effector tracks a figure-eight trajectory in the vertical ($yz$) plane.
For the mobile manipulator, the base follows a planar figure-eight trajectory in the $xy$ plane, while the manipulator end-effector simultaneously tracks a figure-eight trajectory in the $yz$ plane relative to the moving base. For the aerial manipulator, the aerial base follows a figure-eight trajectory in the $xy$ plane, while the attached $2$-DoF serial arm executes periodic oscillatory motion.
These trajectories are chosen to induce sufficiently rich excitation in both configuration and velocity space, enabling meaningful evaluation of model prediction and control performance under coupled and nonlinear dynamics.

\subsection{Model Validation}
For predictive accuracy evaluation, each robot tracks a smooth reference trajectory using a vanilla PID controller for $2$ minutes under operating conditions not seen during training (e.g., payload variation). This setup induces model mismatch and allows assessment of generalization performance.
The prediction error is computed in the residual dynamics space. For baseline methods that directly learn the residual $\hat{\delta}_t$, the error is defined as 
$\delta_e = \tau_t - \big( \bar M\ddot{\chi}_t + \bar C\dot{\chi}_t + \bar f \big) - \hat{\delta}_t$.
For the proposed structured model, the predicted residual is decomposed into inertia, Coriolis, and force components, yielding $ \delta_e = \tau_t - \big( \bar M\ddot{\chi}_t + \bar C\dot{\chi}_t + \bar f \big) - \Delta M_{\theta_M} - \Delta C_{\theta_M} - \Delta f_{\theta_f}$.

\begin{figure}[!h]
\centering
\includegraphics[width=0.98\linewidth]{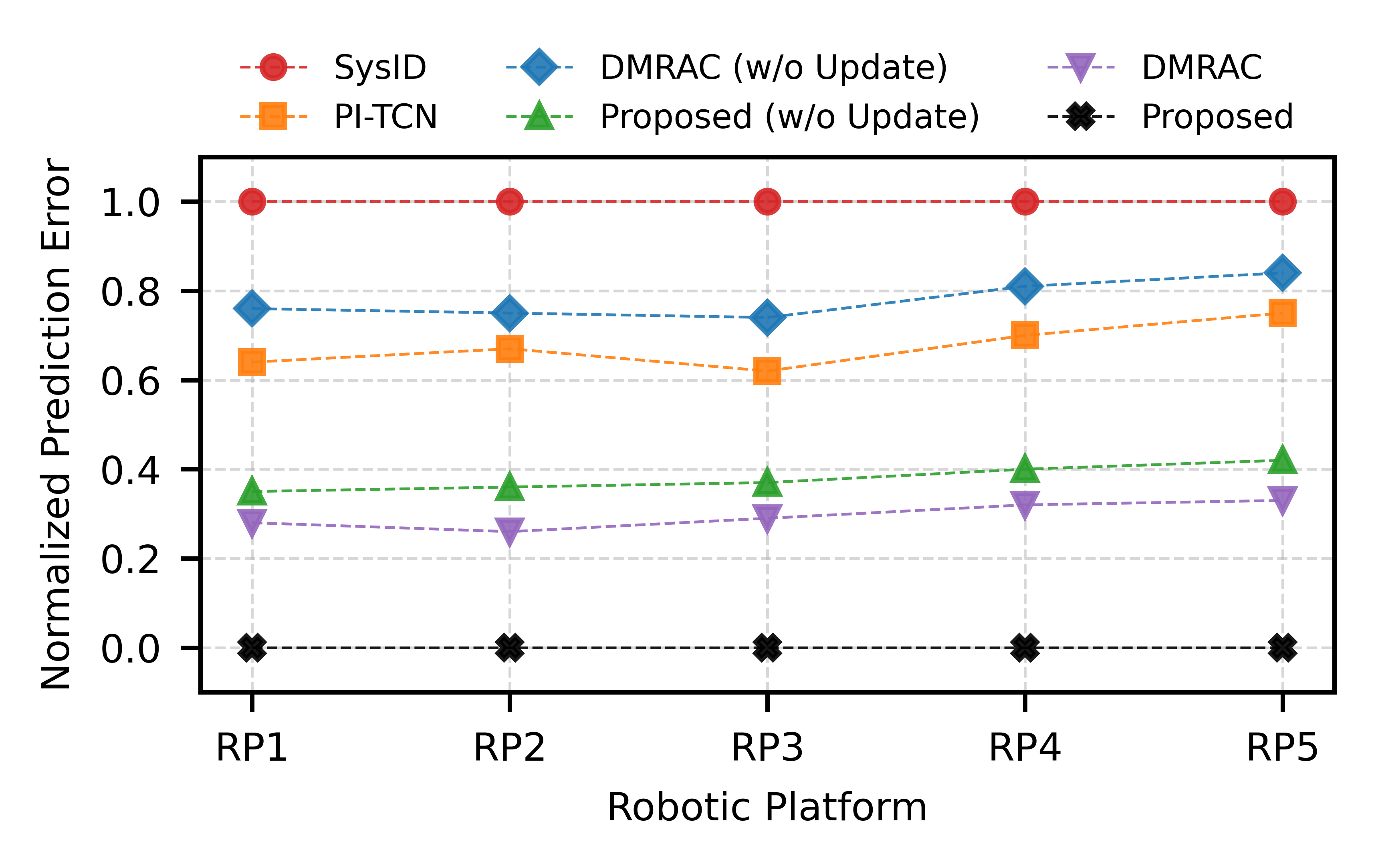}
    \caption{Normalized model prediction error across robotic platforms. Lower is better. RP1--RP5 denote the mobile robot, manipulator, quadrotor, mobile manipulator, and aerial manipulator, respectively.}
    \label{fig:prediction_error}
\end{figure}

Fig.~\ref{fig:prediction_error} compares normalized prediction error\footnotemark across platforms. Sparse SysID performs worst because it relies on a fixed dictionary of basis functions, and its accuracy is fundamentally limited when the true residual dynamics are not sparse in the chosen function space . DMRAC and PI-TCN improve performance by learning residual dynamics, but both treat the mismatch as an unstructured function; PI-TCN benefits from temporal encoding and physics-inspired regularization, yielding better generalization, while DMRAC further improves through online adaptation to non-stationary disturbances . However, these methods exhibit larger errors on the mobile and aerial manipulators due to stronger configuration-dependent coupling, which cannot be captured by purely additive residual models. The proposed method achieves the lowest error by explicitly modeling structured corrections to inertia and velocity coupling, while restricting online adaptation to disturbance-like components. This demonstrates that neither sparsity nor physics-informed residual learning alone is sufficient—accurate modeling requires structure-preserving dynamics together with selective online adaptation for real-world deployment.

\footnotetext{$\mathrm{RMS} = \sqrt{\frac{1}{T} \sum_{t=1}^{T} \|\delta_e(t)\|_2^2}$. RMS is normalized per platform using min--max normalization: $\mathrm{RMS}_{\mathrm{norm}} = \frac{\mathrm{RMS} - \mathrm{RMS}_{\min}}{\mathrm{RMS}_{\max} - \mathrm{RMS}_{\min}}$.}

\subsection{Trajectory Tracking}
We evaluate trajectory tracking performance for the scenarios shown in Fig.~\ref{fig:experiment_scenerio} across all robotic platforms. Each system is controlled using the law in~\eqref{eq:control_law}, and performance is quantified by the root-mean-square error (RMSE) of the tracking signal $e(t) = \chi(t) - \chi_d(t)$. 

\begin{figure}[!h]
\centering
\includegraphics[width=0.95\linewidth]{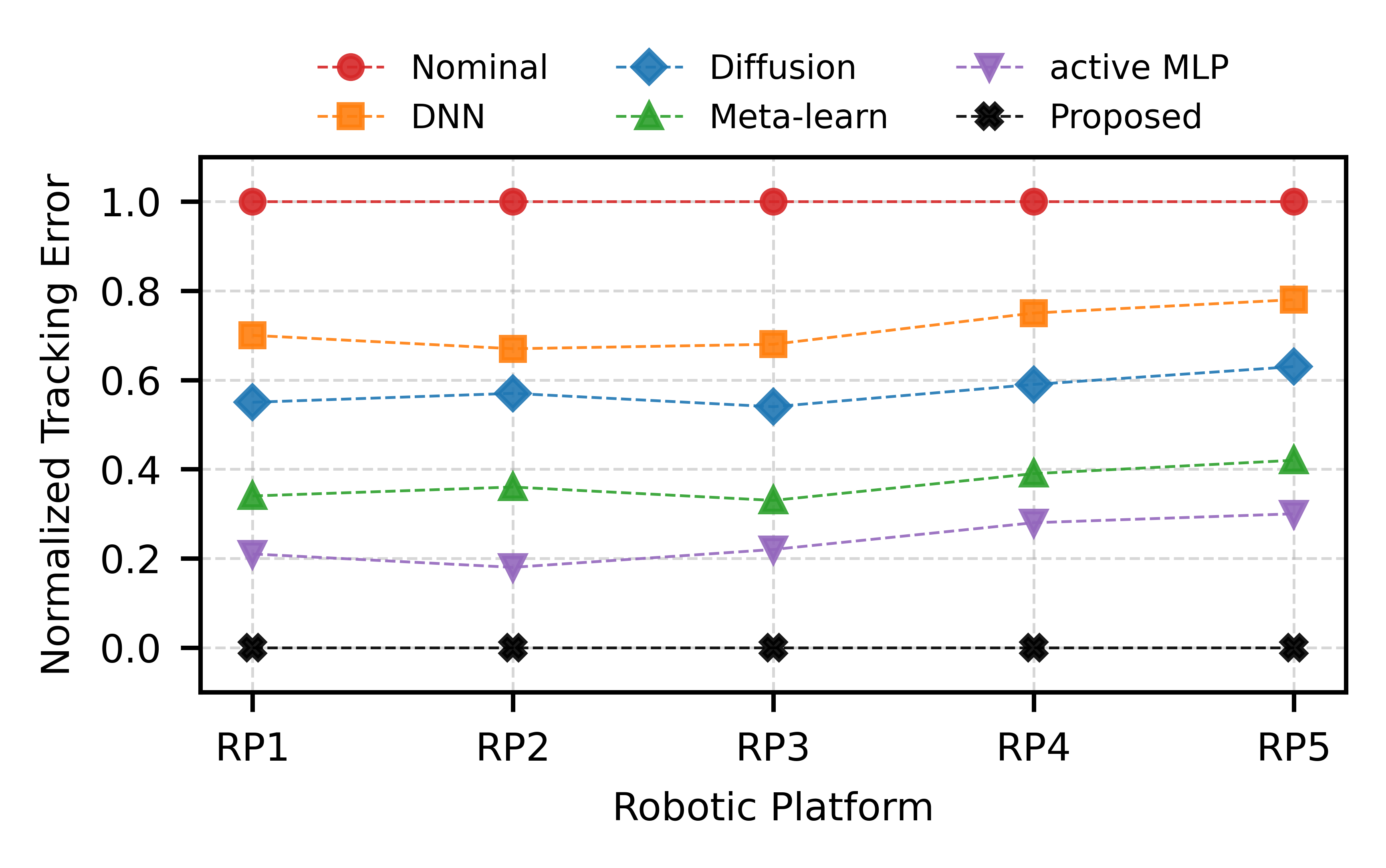}
\caption{Normalized trajectory tracking error across robotic platforms. Lower is better. RP1--RP5 denote the mobile robot, manipulator, quadrotor, mobile manipulator, and aerial manipulator, respectively.}
    \label{fig:tracking_error}
\end{figure}

Fig.~\ref{fig:tracking_error} reports normalized tracking error\footnotemark across platforms. The nominal model performs worst due to its inability to capture unmodeled aerodynamic effects and configuration changes, leading to systematic tracking degradation under real-world disturbances . Purely offline methods such as DNN and diffusion improve performance by learning residual dynamics, with diffusion outperforming DNN because it captures the multimodal and stochastic nature of real-world dynamics and mitigates temporal error propagation , yet both remain limited by distribution shift since they lack online adaptation . Meta-learning further improves tracking by enabling rapid adaptation through shared representations, but its adaptation is restricted to a low-dimensional span of pre-trained basis functions, limiting extrapolation to complex, time-varying disturbances . Active MLP achieves stronger performance by incorporating online updates, but its reliance on batch-based SGD introduces instability, sensitivity to noise, and inconsistent adaptation in fast, nonlinear regimes . In contrast, the proposed method achieves the best performance by combining structured dynamics learning with recursive, uncertainty-aware updates (akin to Bayesian/RLS), enabling stable, sample-efficient adaptation to non-stationary dynamics while preserving physical consistency—demonstrating that robust trajectory tracking requires both structured modeling and principled online adaptation.

\begin{figure*}[]
\centering
\vspace{-2mm}
\includegraphics[width=0.99\linewidth]{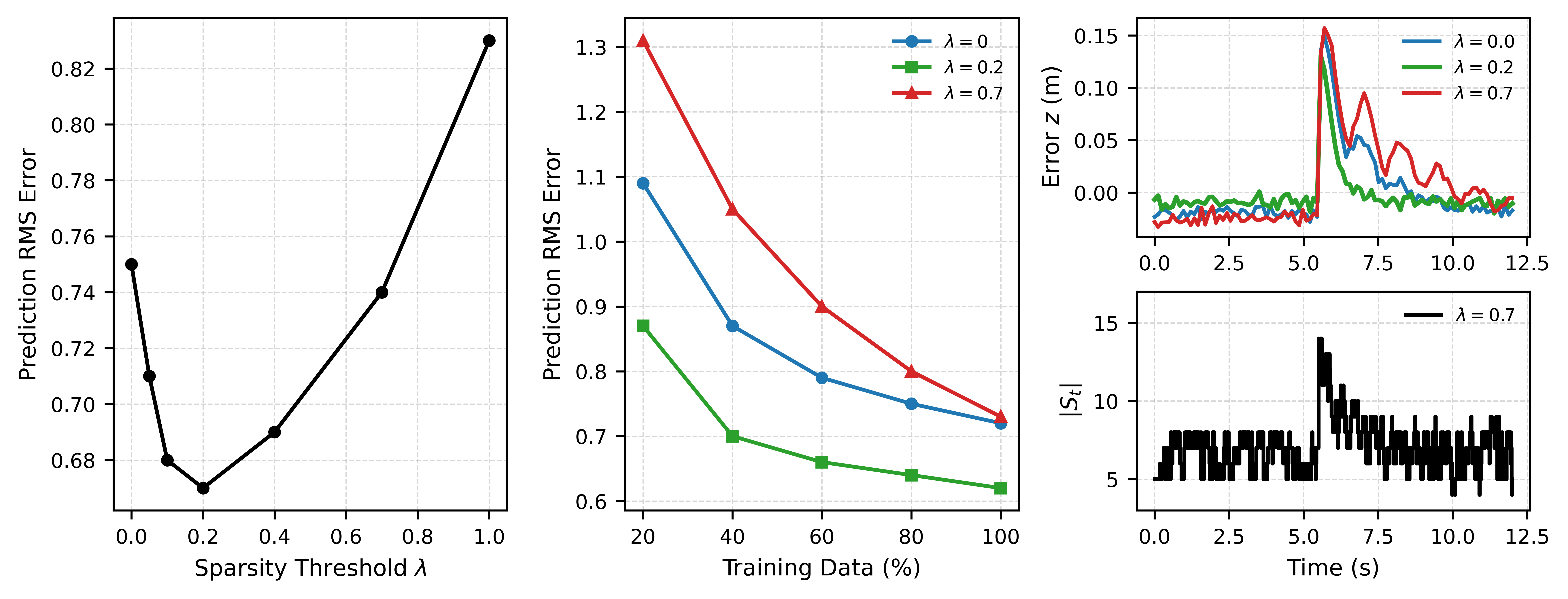}
\caption{Ablation study on the aerial manipulator illustrating the role of sparsity and adaptation. \textbf{Left:} Prediction error versus sparsity threshold $\lambda$, showing a trade-off between dense and overly sparse representations. \textbf{Middle:} Prediction error versus training data for different sparsity levels, highlighting improved data efficiency under moderate sparsity. \textbf{Top-right:} Tracking error along the $z$-axis under a payload drop disturbance, where sparse models exhibit faster and smoother recovery. \textbf{Bottom-right:} Evolution of the active latent dimension $|S_t|$ during the same event, showing increased mode activation during transient dynamics followed by a return to nominal levels.}
    \label{fig:ablation}
\end{figure*}

\subsection{Ablation Study: Effect of Sparsity and Adaptation}

We analyze the effect of sparsity on prediction accuracy, data efficiency, and adaptation using the aerial manipulator, where strong coupling dynamics are present.

Fig.~\ref{fig:ablation} (left) shows prediction error as a function of sparsity threshold $\lambda$. Dense models tend to over-parameterize residual dynamics, while moderate sparsity improves performance by suppressing spurious interactions. Excessive sparsity removes relevant dynamics, leading to underfitting. This confirms that an intermediate sparsity level provides the best balance between expressiveness and robustness.
Fig.~\ref{fig:ablation} (middle) evaluates prediction error versus training data size. Sparse models achieve consistently lower error in the low-data regime, indicating improved sample efficiency. By reducing the effective model dimension, sparsity enables faster convergence and better generalization, whereas dense models require more data and overly sparse models lack sufficient flexibility.

Fig.~\ref{fig:ablation} (top-right) shows prediction error following a payload release event, which induces a transient change in system dynamics. The sparse model exhibits faster error reduction and smoother convergence, highlighting the benefit of adapting within a reduced latent subspace.
Fig.~\ref{fig:ablation} (top-bottom) illustrates the evolution of the active latent dimension $|S_t|$. During nominal operation, only a small number of modes are active. Following the disturbance, additional modes are activated to capture transient dynamics, after which the model returns to a lower-dimensional representation. This demonstrates that the model dynamically adjusts its complexity based on operating conditions.

These results show that sparsity improves data efficiency and adaptation while preserving sufficient expressiveness, making it particularly effective for systems with strong dynamic coupling.

\section{Conclusion}
In this work, we presented a physics-aware residual learning framework for Euler--Lagrange robotic systems that combines structure-preserving mechanical corrections, sparse history-dependent interaction modeling, and selective online adaptation. By decomposing residual dynamics into inertia, induced Coriolis, and generalized-force components, the proposed model preserves key mechanical structure while adapting only the disturbance-sensitive part of the dynamics. This leads to improved predictive accuracy and trajectory tracking across multiple robotic platforms operating under coupled and changing conditions. The results also show that compact sparse representations can improve both data efficiency and online adaptation by restricting updates to a small set of active interaction patterns. A current limitation is that the mechanically structured component is kept fixed during deployment, which may be restrictive under a very large or persistent changes in inertial properties. Future work will therefore investigate joint adaptation of structured mechanical terms, richer uncertainty-aware updates, and extension to contact-rich and higher-dimensional manipulation tasks.

\bibliographystyle{IEEEtran}
\bibliography{our_bib}

\appendix

\begin{assumption}[Smoothness]
\label{ass:smooth}
The nominal dynamics $\bar M(\chi)$, $\bar C(\chi,\dot{\chi})$, and $\bar f(\chi,\dot{\chi})$ are continuously differentiable. $\bar M(\chi)$ is assumed to be symmetric positive definite. The learned correction $\Delta M_{\theta_M}(\chi, z^M)$ is continuously differentiable in $\chi$.
\end{assumption}

\begin{assumption}[Bounded Latent Variables]
\label{ass:bounded_latent}
The latent variables are bounded for all $t$, i.e., $
\|z_t^M\| \le c_M, \quad \|z_t^f\| \le c_f$,
for some constants $c_M, c_f > 0$.
\end{assumption}

\begin{assumption}[Bounded Approximation Error]
\label{ass:error}
The residual modeling error satisfies $
\|\delta_t - \hat{\delta}_t\| \le \epsilon_\delta$,
for some $\epsilon_\delta > 0$.
\end{assumption}

\begin{assumption}[Persistence of Excitation]
\label{ass:pe}
The masked latent features satisfy $
\sum_{t=t_0}^{t_0+T} \tilde z_t^f (\tilde z_t^f)^\top \succeq \beta I$,
for some $T > 0$ and $\beta > 0$.
\end{assumption}

\begin{lemma}[Positive Definiteness of Inertia]
The learned inertia matrix $\hat M(\chi, z^M)$ defined in \eqref{eq:corrected_inertia_general}--\eqref{eq:learned_deltaM} is symmetric positive definite for all $(\chi, z^M)$.
\end{lemma}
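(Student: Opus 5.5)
The plan is to prove symmetry and positive definiteness separately. In both parts I would work with the decomposition $\hat M = \bar M + BB^\top + D$ from \eqref{eq:corrected_inertia_general}--\eqref{eq:learned_deltaM}, where $B = B(\chi,z^M)$ and $D = D(\chi,z^M)$, and I would fix an arbitrary pair $(\chi, z^M)$ at the start. Since the three summands are treated independently, the argument is pointwise. It needs no continuity or boundedness of the decoder $g_{\theta_M}$ beyond the fact that it is a well-defined, finite-valued map. That holds because an MLP with ELU activations outputs a finite vector $(b,d)\in\mathbb{R}^{nr+n}$ for every input.

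For \emph{symmetry}, there are three facts to check. First, $\bar M(\chi)$ is symmetric by Assumption~\ref{ass:smooth}. Second, $(BB^\top)^\top = BB^\top$. Third, $D$ is diagonal, hence symmetric. A sum of symmetric matrices is symmetric, so $\hat M$ is symmetric.

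For \emph{positive definiteness}, take any nonzero $v\in\mathbb{R}^n$ and expand the quadratic form:
\[
v^\top \hat M v = v^\top \bar M v + \|B^\top v\|_2^2 + \sum_{i=1}^n \mathrm{softplus}(d_i)\, v_i^2 .
\]
The first term is strictly positive because $\bar M(\chi)$ is SPD. I could instead cite Property~\ref{prop_1} for a uniform lower bound $\underline m\|v\|^2$, but that property is stated for the true $M$, so I will rely only on the SPD assumption on $\bar M$. The second term is nonnegative. The third term is strictly positive: $\mathrm{softplus}(s)=\log(1+e^{s})>0$ for every finite real $s$, because $e^{s}>0$ gives $1+e^{s}>1$, and since $v\neq 0$ some $v_i^2>0$. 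Therefore $v^\top\hat M v>0$, and in fact $\hat M \succeq \bar M + D \succ \bar M$. This also yields the eigenvalue bound $\lambda_{\min}(\hat M)\ge \lambda_{\min}(\bar M) + \min_i \mathrm{softplus}(d_i)$, which is worth recording.

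The main point of care is the strict positivity of the softplus entries. This is exactly where finiteness of the decoder output matters: as $d_i\to-\infty$, $\mathrm{softplus}(d_i)\to 0$. So strictness of $\Delta M\succ 0$ is pointwise only and not uniform. Uniform-in-$(\chi,z^M)$ positivity of $\Delta M$ would need Assumption~\ref{ass:bounded_latent} together with continuity of $g_{\theta_M}$ and a compact set of $\chi$, and I would note that as a remark rather than claim it. The lemma's conclusion for $\hat M$ does not depend on this, because $\bar M\succ 0$ already gives strictness on its own.
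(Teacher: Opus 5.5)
Your proof is correct and takes the same route as the paper: you write $\hat M = \bar M + BB^\top + D$ and expand the quadratic form, using $\bar M \succ 0$, $BB^\top \succeq 0$, and the strictly positive softplus diagonal. Your explicit symmetry check, and your observation that strict positivity already follows from $\bar M \succ 0$ alone, are minor refinements that the paper leaves implicit.
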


\begin{proof}
By construction, $ \hat M(\chi, z^M) = \bar M(\chi) + B(\chi, z^M) B(\chi, z^M)^\top + D(\chi, z^M)$, where $\bar M(\chi) \in \mathbb{R}^{n \times n}$ is symmetric positive definite by assumption, $B B^\top$ is symmetric positive semidefinite, and $D = \mathrm{diag}(\mathrm{softplus}(d))$ is diagonal with strictly positive entries.
Therefore, for any nonzero vector $x \in \mathbb{R}^n$, $ x^\top \hat M x = x^\top \bar M x + x^\top B B^\top x + x^\top D x > 0$, since $x^\top \bar M x > 0$ and $x^\top D x > 0$. Hence, $\hat M(\chi, z^M) \succ 0$ for all $(\chi, z^M)$.
\end{proof}

\begin{lemma}[Approximate Euler--Lagrange Structure Preservation]
The learned matrices $\hat M(\chi,z^M)$ and $\hat C(\chi,\dot{\chi},z^M)$ satisfy 
$\dot{\hat M} - 2\hat C = S(\chi,\dot{\chi},z^M)$,
where $S$ is skew-symmetric up to bounded approximation error.
\end{lemma}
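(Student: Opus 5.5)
The plan is the classical computation showing $\dot M - 2C$ is skew-symmetric whenever $C$ comes from Christoffel symbols, applied to $\hat M$. The only new feature is that $\hat M$ also depends on the time-varying latent code $z_t^M$. The term this produces, together with the automatic-differentiation error, is what I will call the bounded approximation error.

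\textbf{Step 1: exact derivative.} By Assumption~\ref{ass:smooth}, $\hat M = \bar M + \Delta M_{\theta_M}$ is $C^1$ in $\chi$. Since $z^M_t$ evolves with the history, I write the total time derivative as
\[
\dot{\hat M}_{ij} = \sum_{k} \frac{\partial \hat M_{ij}}{\partial \chi_k}\dot\chi_k + E_{ij}, \qquad E_{ij} = \frac{\partial \hat M_{ij}}{\partial z^M}\,\dot z^M .
\]
In discrete time, $E$ is instead a finite-difference term in $z^M$.

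\textbf{Step 2: the classical identity for the configuration part.} I substitute \eqref{eq:christoffel_symbols_general}--\eqref{eq:coriolis_from_christoffel_general} into the configuration part of $\dot{\hat M}_{ij} - 2\hat C_{ij}$. Because $\hat M$ is symmetric (by the Positive Definiteness lemma, or directly from the parameterization $BB^\top + D$), the symmetric partial-derivative terms cancel. This leaves
\[
\sum_k \left( \frac{\partial \hat M_{jk}}{\partial \chi_i} - \frac{\partial \hat M_{ik}}{\partial \chi_j} \right) \dot\chi_k =: S^0_{ij},
\]
which satisfies $S^0_{ji} = -S^0_{ij}$ exactly. Hence
\[
\dot{\hat M} - 2\hat C = S^0 + E + E_{\mathrm{AD}},
\]
where $E_{\mathrm{AD}}$ collects numerical errors in the automatic-differentiation derivatives.

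\textbf{Step 3: bound the non-skew part (main obstacle).} The remainder $E + E_{\mathrm{AD}}$ is symmetric, not skew, so it must be shown to be bounded. By Assumption~\ref{ass:bounded_latent}, $z^M$ lies in the compact set $\|z^M\| \le c_M$. For $\chi$ and $\dot\chi$ in a compact operating set, continuity of $\partial \hat M/\partial z^M$ (the decoder uses smooth ELU and softplus activations) bounds $\|\partial \hat M/\partial z^M\|$. This is enough to bound $\|E\|$, provided the rate of change of $z^M$ is also bounded.

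Assumption~\ref{ass:bounded_latent} gives that bound only in the discrete setting: successive codes then differ by at most $2c_M$. A bound on $\dot z^M$ itself needs more, namely Lipschitz continuity of the encoder $\phi$ and soft-thresholding (which is 1-Lipschitz) together with bounded state increments over the window. I would state this explicitly as a mild additional hypothesis, or else define $S := S^0$ and absorb $E + E_{\mathrm{AD}}$ into the \enquote{bounded approximation error} that the lemma allows.

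In the special case $\dot z^M = 0$, i.e.\ a frozen code, the identity is exactly skew-symmetric up to $E_{\mathrm{AD}}$.
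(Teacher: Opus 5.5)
Your argument has the same skeleton as the paper's proof: the classical Christoffel-symbol cancellation applied to $\hat M$, plus a bounded non-skew remainder. It is more precise about what that remainder is.

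The paper only asserts that automatic differentiation and function approximation introduce bounded deviations under Assumption~\ref{ass:smooth}. You show instead two things:
\begin{enumerate}
\item With exact $\chi$-derivatives, the configuration part $S^0$ is exactly skew, so $E_{\mathrm{AD}}$ is just numerical error.
\item The substantive non-skew term is the symmetric matrix $E = \frac{\partial \hat M}{\partial z^M}\dot z^M$. It appears because $\hat M$ depends on the time-varying latent code, while $\hat C$ is built from $\chi$-derivatives alone.
\end{enumerate}

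This gives a concrete characterization of the deviation: it vanishes for a frozen code. That gives the lemma real content, since as stated any bounded matrix is skew up to a bounded error.

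It also correctly exposes that Assumptions~\ref{ass:smooth} and~\ref{ass:bounded_latent} alone do not bound $E$. You additionally need:
\begin{itemize}
\item a compact operating set for $\chi$;
\item $C^1$ dependence of $\hat M$ on $z^M$, which the ELU/softplus architecture supplies but the assumptions do not state;
\item a rate bound on $z^M$.
\end{itemize}

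One small correction. With the paper's index convention, the $\partial \hat M_{ij}/\partial \chi_k$ terms cancel identically, and $S^0$ is skew without using symmetry of $\hat M$. Symmetry is instead what makes $E$ symmetric.

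Likewise, $E_{\mathrm{AD}}$ need not be symmetric. This does not affect your conclusion.
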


\begin{proof}
The Coriolis matrix is constructed from $\hat M$ via Christoffel symbols. Under exact differentiation, the identity holds for Euler--Lagrange systems. In practice, automatic differentiation and function approximation introduce bounded deviations under Assumption~\ref{ass:smooth}, yielding approximate skew-symmetry. This ensures that the learned model respects the intrinsic coupling between inertia and velocity-dependent effects.
\end{proof}

\begin{proposition}[Bounded Parameter Estimates]
\label{prop:bounded_param}
Under Assumptions~\ref{ass:bounded_latent} and~\ref{ass:pe}, the parameter estimates $\mu_{t,j}$ and covariance matrices $P_{t,j}$ remain bounded.
\end{proposition}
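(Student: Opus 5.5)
Since each output dimension $j$ is updated by the same recursion with regressor $z_t^f$, we fix $j$ and work in information form. With the scalar observation model $r_t^{(j)} = (z_t^f)^\top\theta_j+\varepsilon_{t,j}$, the Joseph-form update of Section~\ref{sec:online_update} with gain $K_{t,j}$ equals the standard Kalman/RLS covariance update. Hence, by the matrix inversion lemma,
\begin{equation*}
P_{t,j}^{-1} = P_{t-1,j}^{-1} + \sigma_j^{-2} z_t^f (z_t^f)^\top
= P_{0,j}^{-1} + \sigma_j^{-2}\sum_{s=1}^{t} z_s^f (z_s^f)^\top .
\end{equation*}
The plan is to bound $P_{t,j}$ from above and below through this identity, and then to bound $\mu_{t,j}$ using the closed form of the posterior mean.

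For the covariance, the identity gives $P_{t,j}^{-1}\succeq P_{0,j}^{-1}$, so $0\preceq P_{t,j}\preceq P_{0,j}=0.1I$ for every $t$; this part needs no assumption. Next I would read the masked features $\tilde z_t^f$ in Assumption~\ref{ass:pe} as the sparse codes $z_t^f$ used in the regression. Under that reading, every window of length $T$ contributes at least $\beta I$. This yields the lower bound $P_{t,j}^{-1}\succeq P_{0,j}^{-1}+\sigma_j^{-2}\lfloor (t-t_0)/T\rfloor \beta I$, so $P_{t,j}$ actually decays at rate $O(1/t)$ and stays positive definite. Assumption~\ref{ass:bounded_latent} gives the matching upper bound $\lambda_{\max}(P_{t,j}^{-1})\le \lambda_{\max}(P_{0,j}^{-1})+t c_f^2/\sigma_j^2$, so the condition number of $P_{t,j}$ stays bounded for $t\ge t_0+T$.

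For the mean, I would write the closed form
\begin{equation*}
\mu_{t,j}=P_{t,j}\Big(P_{0,j}^{-1}\mu_{0,j}+\sigma_j^{-2}\sum_{s=1}^{t} z_s^f r_s^{(j)}\Big).
\end{equation*}
A bound on $r_s^{(j)}$ is needed. Decompose $r_s = \Theta^\star z_s^f + e_s$, where $\Theta^\star$ is any fixed reference decoder (for example, the offline one). The residual $e_s$ is bounded by $\epsilon_\delta$ via Assumption~\ref{ass:error}, plus the noise term, which one handles either in expectation or by assuming it is bounded. Substituting this decomposition gives
\begin{equation*}
\mu_{t,j}-\theta_j^\star = P_{t,j}P_{0,j}^{-1}(\mu_{0,j}-\theta_j^\star) + \sigma_j^{-2}P_{t,j}\sum_{s\le t} z_s^f e_{s,j}.
\end{equation*}
The first term is bounded by $\|\mu_{0,j}-\theta_j^\star\|$, because $0\prec P_{t,j}P_{0,j}^{-1}\preceq I$ in the appropriate sense (both are functions of the same growing information matrix). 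For the second term, Cauchy--Schwarz in the $P_{t,j}$-weighted norm gives
\begin{equation*}
\Big\|P_{t,j}\sum_s z_s^f e_{s,j}\Big\|\le \|P_{t,j}^{1/2}\|\,\Big(\sum_s e_{s,j}^2\Big)^{1/2}\,\Big\|P_{t,j}^{1/2}\Big(\sum_s z_s^f(z_s^f)^\top\Big)^{1/2}\Big\|.
\end{equation*}
Using $\sum_s z_s^f(z_s^f)^\top\preceq \sigma_j^2 P_{t,j}^{-1}$, the last factor is at most $\sigma_j$. With the PE lower bound on $P_{t,j}^{-1}$, this becomes $O(\sqrt{1/t})\cdot O(\sqrt t\,\epsilon_\delta)=O(\epsilon_\delta)$, uniformly in $t$.

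The main obstacle is the bounded-mean step, specifically making the $1/\sqrt t$ decay of $\|P_{t,j}^{1/2}\|$ match the $\sqrt t$ growth of the accumulated residual energy. This is exactly where Assumption~\ref{ass:pe} is used: without it, $P_{t,j}$ need not shrink in unexcited directions, while the sum can still grow along excited ones. I would first try the weighted Cauchy--Schwarz argument above. If the constants from the finite window $[t_0,t_0+T]$ prove awkward, I would fall back on the weaker but cleaner statement for $t\in[0,t_0]$, where a crude bound $\|\mu_{t,j}\|\le \|\mu_{0,j}\|+t\,\|P_{0,j}\|c_f(\cdots)/\sigma_j^2$ holds trivially over a finite horizon. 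I would also state the dependence on the noise model explicitly, since bounded rather than Gaussian $\varepsilon_t$ is what makes the bound deterministic.
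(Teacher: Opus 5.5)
Your argument is correct under the interpretations you state, and it is a genuinely more careful route than the paper's. The paper's proof is a two-sentence appeal to standard recursive least squares: bounded regressors plus persistence of excitation keep $P_{t,j}$ positive definite and bounded, ``which ensures bounded parameter estimates.'' Your information form shows that this sketch gives the hypotheses the wrong roles. The bound $0\prec P_{t,j}\preceq P_{0,j}$ needs no assumption, because there is no forgetting factor, and under PE $P_{t,j}\to 0$. The real job of PE is the one you isolate: it supplies $\|P_{t,j}\|^{1/2}=O(t^{-1/2})$, which cancels the $O(\sqrt t)$ growth of $(\sum_s e_{s,j}^2)^{1/2}$ in your weighted Cauchy--Schwarz step. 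Bounded covariance alone also does not bound the mean, and you rightly import Assumption~\ref{ass:error} to bound the targets. If that assumption is read as a bound on the realized residual with the offline $\Theta$, the observation noise already sits inside $e_s$ and needs no separate treatment. Both of your additions are necessary rather than conveniences. With only Assumptions~\ref{ass:bounded_latent} and~\ref{ass:pe}, nothing constrains $r_t$. With PE holding on a single window only, a scalar code $z_s^f=1/s$ afterwards and constant $e_s$ make $\mu_{t,j}$ drift like $\log t$. Some minor simplifications apply. Since $P_{0,j}=0.1I$, the factor $P_{t,j}P_{0,j}^{-1}=10P_{t,j}$ is an exact contraction. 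Using $\|P_{t,j}\|\le 0.1$ for small $t$ gives $\sup_t t\|P_{t,j}\|<\infty$, so no finite-horizon fallback is needed. Assumption~\ref{ass:bounded_latent} is never actually used in your bound on $\mu_{t,j}$. The paper's version buys brevity. Yours buys an actual proof, with a bound $\|\mu_{0,j}-\theta_j^\star\|+C\,\epsilon_\delta$ where $C$ is independent of $t$, and an accurate list of what the proposition requires.
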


\begin{proof}
The update corresponds to recursive Bayesian linear regression with bounded regressors $\tilde z_t^f$. Under persistence of excitation, the covariance matrix remains positive definite and bounded, which ensures bounded parameter estimates. This property guarantees stable online adaptation without parameter drift.
\end{proof}

\begin{proposition}[Effective Adaptation Dimension]
\label{prop:sparsity}
Let $z_t^f \in \mathbb{R}^{d_z}$ be the sparse latent code obtained via soft-thresholding, and define the active index set $
S_t = \{\, i \mid |(z_t^f)_i| > 0 \,\}, \quad |S_t| = s \ll d_z$.
Then the regression model $r_t = \Theta z_t^f$ depends only on the coordinates indexed by $S_t$, implying that adaptation occurs effectively in an $s$-dimensional subspace.
\end{proposition}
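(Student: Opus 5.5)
The plan is to make the statement precise in two parts: first that the prediction $\Theta z_t^f$ is unaffected by columns of $\Theta$ outside $S_t$, and second that the Bayesian update of Section~\ref{sec:online_update} changes $\mu_{t,j}$ only along the coordinates in $S_t$. Let $E_{S_t}\in\mathbb{R}^{d_z\times s}$ collect the standard basis vectors $e_i$, $i\in S_t$. By the definition of $S_t$ and of the soft-thresholding operator \eqref{eq:soft_threshold_general}, $(z_t^f)_i=0$ for $i\notin S_t$. Hence $z_t^f=E_{S_t}E_{S_t}^\top z_t^f$. It follows that
\[
\Theta z_t^f=\sum_{i\in S_t}\Theta_{:,i}(z_t^f)_i=(\Theta E_{S_t})(E_{S_t}^\top z_t^f).
\]
So the regression $r_t=\Theta z_t^f+\varepsilon_t$ is a linear model in the $s$-dimensional regressor $E_{S_t}^\top z_t^f$ with parameter block $\Theta E_{S_t}$. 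The likelihood at time $t$ is therefore constant in the remaining columns $\Theta E_{S_t^c}$. This settles the first part by pure algebra.

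For the update, I would examine the gain $K_{t,j}=P_{t-1,j}z_t^f/((z_t^f)^\top P_{t-1,j}z_t^f+\sigma_j^2)$. Since $z_t^f$ lies in $\mathcal{V}_t=\mathrm{span}\{e_i:i\in S_t\}$, the gain satisfies $K_{t,j}\in P_{t-1,j}\mathcal{V}_t$. The increment $\mu_{t,j}-\mu_{t-1,j}$ is a scalar multiple of $K_{t,j}$, so it lies in the image of an $s$-dimensional subspace under $P_{t-1,j}$. The covariance change is a rank-one downdate, $P_{t,j}=P_{t-1,j}-K_{t,j}(z_t^f)^\top P_{t-1,j}$, again supported on that same subspace. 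Since the Joseph form equals this for the optimal gain, the two forms agree. Thus at each step the information gained lives in at most an $s$-dimensional subspace, which is the meaning of ``effectively $s$-dimensional adaptation.''

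The main obstacle is that $P_{t-1,j}$ is generally not block diagonal with respect to $S_t$. Once earlier steps with different supports have correlated coordinates, $P_{t-1,j}\mathcal{V}_t$ need not equal $\mathcal{V}_t$, and inactive coordinates of $\mu_{t,j}$ can move through prior correlations. I would handle this in three steps. First, note that at initialization $P_{0,j}=0.1I$ (Table~\ref{tab:online_details}), so the first update is exactly confined to the coordinates in $S_t$. Second, show by induction that whenever $P_{t-1,j}$ is block diagonal with respect to $(S_t,S_t^c)$, the update leaves the $S_t^c$ block of $\mu$ and $P$ unchanged. Third, for the general case, state the claim as the update having rank one in information form. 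Writing $P_{t,j}^{-1}=P_{t-1,j}^{-1}+\sigma_j^{-2}z_t^f(z_t^f)^\top$, the added precision is nonzero only in the $S_t\times S_t$ block, so the effective dimension of adaptation is $s$ even though correlations may propagate.
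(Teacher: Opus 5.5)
Your proposal is correct, and it goes further than the paper's proof rather than reproducing it. The paper's proof is only your first paragraph: since $(z_t^f)_i=0$ for $i\notin S_t$, the inactive columns of $\Theta$ contribute nothing to $\Theta z_t^f$. It then simply asserts that this also confines the parameter updates to those columns. You check that assertion against the actual recursion and correctly find it is not literally true for the mean and covariance. Because $K_{t,j}\propto P_{t-1,j}z_t^f$, the gain leaks into $S_t^c$ once earlier updates with different supports have created cross-covariances. For example, $S_1=\{1,2\}$ makes $(P_{1,j})_{12}\neq 0$, and then $S_2=\{2\}$ moves $(\mu_{2,j})_1$. The paper's version is shorter but overclaims; your information-form statement gives the precise sense in which the proposition holds.

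Two refinements. First, your second step is a one-step fact, not an inductive invariant. Block-diagonality with respect to $(S_t,S_t^c)$ survives the update, but it need not hold with respect to $(S_{t+1},S_{t+1}^c)$ when the support changes, and the paper stresses that it does change. The induction therefore only closes for constant or block-compatible supports. Second, to finish your third step, add the companion update $P_{t,j}^{-1}\mu_{t,j}=P_{t-1,j}^{-1}\mu_{t-1,j}+\sigma_j^{-2}z_t^f r_t^{(j)}$. Its increment is also supported on $S_t$, so the whole natural-parameter update lives on $S_t$. The invertibility of $P_{t-1,j}$ that this needs follows from $P_{0,j}=0.1I\succ 0$ and the fact that the Joseph form preserves positive definiteness when $\sigma_j^2>0$.
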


\begin{proof}
Since $(z_t^f)_i = 0$ for all $i \notin S_t$, the contribution of inactive coordinates to $r_t = \Theta z_t^f$ is zero. Hence, only the columns of $\Theta$ corresponding to indices in $S_t$ influence the prediction and parameter updates, restricting adaptation to those directions.
\end{proof}

\begin{proposition}[Error Reduction in Active Subspace]
Under persistent excitation of the active latent features, the prediction error of the generalized force component tends to decrease over time within the active subspace.
\end{proposition}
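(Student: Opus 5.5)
The argument treats each output dimension $j$ separately and views the Bayesian update of Section~\ref{sec:online_update} as recursive least squares (equivalently, a Kalman filter with static state). The main object is the parameter error $\tilde\theta_{t,j} = \mu_{t,j} - \theta_j^\star$. Here $\theta_j^\star$ is the best linear fit of the force residual $r_t^{(j)}$ on $z_t^f$, so that $r_t^{(j)} = (z_t^f)^\top \theta_j^\star + \varepsilon_{t,j}$. Under Assumption~\ref{ass:error}, $\varepsilon_{t,j}$ collects the noise and a bounded approximation error of size at most $\epsilon_\delta$. By Proposition~\ref{prop:sparsity}, the prediction $(z_t^f)^\top\mu_{t,j}$ depends only on the coordinates in $S_t$. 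It therefore suffices to show that the projection of $\tilde\theta_{t,j}$ onto the excited (active) directions shrinks. The inactive coordinates receive zero gain: $K_{t,j}$ is supported on $P_{t-1,j}$ applied to a vector supported on $S_t$, and only the coupling through the off-diagonal entries of $P$ needs care.

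\textbf{Information form.} First I rewrite the Joseph-form covariance update in information form, $P_{t,j}^{-1} = P_{t-1,j}^{-1} + \sigma_j^{-2} z_t^f (z_t^f)^\top$. The mean update then reads $P_{t,j}^{-1}\mu_{t,j} = P_{t-1,j}^{-1}\mu_{t-1,j} + \sigma_j^{-2} z_t^f r_t^{(j)}$. This gives the closed form
\[
\tilde\theta_{t,j} = P_{t,j}\Big(P_{0,j}^{-1}\tilde\theta_{0,j} + \sigma_j^{-2}\sum_{s\le t} z_s^f \varepsilon_{s,j}\Big).
\]

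\textbf{Error bound via excitation.} Assumption~\ref{ass:pe}, applied on consecutive windows restricted to the active coordinates, gives $\lambda_{\min}(P_{t,j}^{-1}) \ge \lambda_{\min}(P_{0,j}^{-1}) + \lfloor t/T\rfloor \beta/\sigma_j^2$, so $\|P_{t,j}\|$ decays like $O(1/t)$. Combining this with Proposition~\ref{prop:bounded_param} for boundedness, the initial-error term decays as $O(1/t)$. For the noise term I take the zero-mean, independent noise model of Section~\ref{sec:online_update} together with $\|z_t^f\|\le c_f$ (Assumption~\ref{ass:bounded_latent}). A conditional-variance computation then gives $\mathbb{E}\|\tilde\theta_{t,j}\|^2 \lesssim \|P_{t,j}\|\,(1 + c_f^2)$, which is also $O(1/t)$. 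The prediction error on the active subspace, $|(z_t^f)^\top\tilde\theta_{t-1,j}| \le c_f\|\tilde\theta_{t-1,j}\|$, therefore decreases in expectation down to a floor of order $\epsilon_\delta$. A monotone-in-expectation version of this is what ``tends to decrease'' will mean.

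\textbf{Main obstacle.} The hardest step is that Assumption~\ref{ass:pe} is stated in the full space (and for $\tilde z^f_t$), whereas the support $S_t$ changes over time. Coordinates that are rarely active accumulate no information, and their error does not decrease. My first attempt is to define the excitation on the union of supports over each window. I would then claim error reduction only on the subspace spanned by $\{z_s^f\}$ over that window, restricting $P^{-1}$ to that subspace and using that the prediction only ever probes vectors in it. A second difficulty is that $\theta_j^\star$ is only defined if the residual is genuinely time-invariant. For slowly drifting disturbances I would instead state a tracking bound with the drift as an extra additive term. I would flag that the claim holds only in this qualified sense.
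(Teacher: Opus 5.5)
Your proposal is correct and follows the same route as the paper. The paper's proof is a one-line appeal to standard convergence of recursive least squares under persistent excitation, restricted to the active latent coordinates. You carry out that appeal explicitly: the information form $P_{t,j}^{-1}=P_{t-1,j}^{-1}+\sigma_j^{-2}z_t^f(z_t^f)^\top$, persistent-excitation-driven growth of $\lambda_{\min}(P_{t,j}^{-1})$, and $O(1/t)$ decay to an $\epsilon_\delta$-level floor. You also rightly flag the caveats the paper glosses over: the off-diagonal coupling in $P_{t,j}$, the time-varying support $S_t$, and the mismatch between Assumption~\ref{ass:pe} and the sparse code $z_t^f$.
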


\begin{proof}
The result follows from standard convergence properties of recursive least squares under persistent excitation, restricted to the active latent coordinates.
\end{proof}

\begin{theorem}[Tracking Error Boundedness]
Consider the closed-loop system under the control law in~\eqref{eq:control_law}. Under Assumptions~\ref{ass:smooth}--\ref{ass:pe}, the tracking error $(e, \dot e)$ is uniformly ultimately bounded (UUB), i.e., $
\|(e(t), \dot e(t))\| \le c_1 e^{-c_2 t} + c_3 \epsilon_\delta$,
for some constants $c_1, c_2, c_3 > 0$.
\end{theorem}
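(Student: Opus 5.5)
The approach is to substitute the control law \eqref{eq:control_law} into the true dynamics \eqref{eq:EL_residual}, obtain a linear error system with a bounded perturbation, and close with a quadratic Lyapunov function.

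\textbf{Step 1: error dynamics.} The true dynamics can be written as $\bar M\ddot\chi+\bar C\dot\chi+\bar f+\delta_t=\tau$, using the residual target \eqref{eq:residual_target_general}. The learned model satisfies
\begin{equation*}
\hat M\ddot\chi+\hat C\dot\chi+\hat f=\bar M\ddot\chi+\bar C\dot\chi+\bar f+\hat\delta_t .
\end{equation*}
Setting the true and modelled torques equal to $\tau$ from \eqref{eq:control_law} gives $\hat M(\ddot\chi-u)=\delta_t-\hat\delta_t$. Substituting $u$, this becomes
\begin{equation*}
\ddot e+K_d\dot e+K_pe=\hat M^{-1}(\chi_t,z_t^M)\,(\delta_t-\hat\delta_t)=:w_t .
\end{equation*}
Here $\hat\delta_t$ is evaluated with the current posterior mean $\Theta_t$. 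By the Lemma on positive definiteness of inertia, $\hat M\succeq \bar M\succeq \underline m I$, so $\|\hat M^{-1}\|\le 1/\underline m$. Assumption~\ref{ass:error} then gives $\|w_t\|\le\epsilon_\delta/\underline m$.

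\textbf{Step 2: Lyapunov bound.} Let $x=(e,\dot e)$, so that $\dot x=Ax+Bw$ with $A=\begin{bmatrix}0&I\\-K_p&-K_d\end{bmatrix}$ Hurwitz, since $K_p,K_d\succ0$. Take $P\succ0$ solving $A^\top P+PA=-I$ and $V=x^\top Px$. Then
\begin{equation*}
\dot V\le-\|x\|^2+2\|P\|\,\|x\|\,\epsilon_\delta/\underline m .
\end{equation*}
Using Young's inequality, $\dot V\le -\tfrac{1}{2\lambda_{\max}(P)}V+c\,\epsilon_\delta^2$. The comparison lemma then gives
\begin{equation*}
\|x(t)\|\le\sqrt{\lambda_{\max}(P)/\lambda_{\min}(P)}\,\|x(0)\|\,e^{-c_2t}+c_3\epsilon_\delta .
\end{equation*}
This is the claimed estimate with $c_1$ proportional to $\|x(0)\|$. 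Alternatively one can use the variation-of-constants formula $x(t)=e^{At}x(0)+\int_0^t e^{A(t-s)}Bw\,ds$ together with $\|e^{At}\|\le \kappa e^{-\alpha t}$.

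\textbf{Role of the remaining assumptions.} Assumption~\ref{ass:smooth} ensures that $\hat C$ (via Christoffel symbols) and the closed loop are well defined and locally Lipschitz, so solutions exist. Assumption~\ref{ass:bounded_latent} together with Proposition~\ref{prop:bounded_param} (which uses Assumption~\ref{ass:pe}) guarantees that $\Theta_t$, and hence $\hat f$ and $\tau$, stay bounded. This matters because the model is time-varying through online updates, yet the bound on $w$ must still hold uniformly.

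\textbf{Main obstacle.} The main difficulty is justifying that Assumption~\ref{ass:error} applies uniformly to the \emph{adapted} model along closed-loop trajectories. The concern is that $\hat\delta_t$ changes with $\Theta_t$ and with $\ddot\chi$ itself, since $\Delta M\ddot\chi$ appears inside $\hat\delta_t$. My first attempt would be to note that the $\Delta M\ddot\chi$ terms cancel exactly in the derivation of Step 1. Then only the force and Coriolis mismatch remain, and these can be bounded using boundedness of $\Theta_t$ (Proposition~\ref{prop:bounded_param}) and of $z^f_t$. The constant $c_3$ would be adjusted to absorb any extra bounded terms.

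A second issue is discrete-time sampling of the updates at 70\,Hz. I would treat $\Theta_t$ as piecewise constant and note that the Lyapunov bound in Step 2 needs only boundedness of $w$, not its continuity.
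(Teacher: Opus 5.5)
Your argument is correct and is essentially the paper's proof: you write the closed loop as $\dot x = Ax + Bw$ with $A$ Hurwitz, take $V = x^\top P x$ from the Lyapunov equation, apply Young's inequality, and invoke Assumption~\ref{ass:error} directly. You are more explicit than the paper that the perturbation is $w=\hat M^{-1}(\hat\delta_t-\delta_t)$; the sign is flipped in your Step~1, which is harmless. Note, however, that bounding $\|\hat M^{-1}\|$ by $1/\underline m$ requires a uniform lower bound on $\bar M$, and Property~\ref{prop_1} states this only for the true inertia $M$. Your side remark that only force and Coriolis mismatch survive the $\Delta M\ddot\chi$ cancellation is also not right, since $\hat\delta_t-\delta_t=(\hat M-M)\ddot\chi+(\hat C-C)\dot\chi+(\hat f-f)$ still contains the inertia mismatch multiplied by $\ddot\chi$. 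Your proof does not rely on that remark, because, like the paper, you take Assumption~\ref{ass:error} at face value along closed-loop trajectories.
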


\begin{proof}
Define the state vector $x = \begin{bmatrix} e \\ \dot e \end{bmatrix}$. The closed-loop error dynamics can be written as $\dot{x} = A x + B \sigma_{\delta}$, where {\footnotesize  $A = \begin{bmatrix} 0 & I \\ - K_p & - K_d \end{bmatrix}$ and $B = \begin{bmatrix} 0 \\ I \end{bmatrix}$ }, and $\sigma_{\delta}$ represents the bounded modeling error.
Since $K_p$ and $K_d$ are positive definite, the matrix $A$ is Hurwitz. Therefore, for any positive definite matrix $Q$, there exists a unique positive definite matrix $P$ satisfying the Lyapunov equation $A^\top P + P A = -Q$.

Consider the Lyapunov function $V = x^\top P x$. Its time derivative is $\dot V = x^\top (A^\top P + P A)x + 2 x^\top P B \sigma_{\delta}$. Using the Lyapunov equation, $\dot V = -x^\top Q x + 2 x^\top P B \sigma_{\delta}$.
Applying Cauchy--Schwarz and norm bounds, $\dot V \le -\lambda_{\min}(Q)\|x\|^2 + 2 \|P B\| \|x\| \|\sigma_{\delta}\|$.
Using Young’s inequality, $\dot V \le -\alpha \|x\|^2 + \beta \|\sigma_{\delta}\|^2$, for some constants $\alpha, \beta > 0$.

Since $\|\sigma_{\delta}\| \le \epsilon_\delta$, standard Lyapunov arguments imply that $x$ is uniformly ultimately bounded, yielding $\|x(t)\| \le c_1 e^{-c_2 t} + c_3 \epsilon_\delta$.
\end{proof}

\begin{corollary}
If the residual modeling error satisfies $\epsilon_\delta = 0$, the tracking error converges exponentially to zero.
\end{corollary}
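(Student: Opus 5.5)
The statement is the corollary: with $\epsilon_\delta = 0$, the tracking error converges exponentially. The plan is to specialize the proof of the Tracking Error Boundedness theorem to this case, rather than build a new argument.

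\textbf{Step 1: the error dynamics become unforced.} The theorem's proof writes the closed-loop error dynamics as
$$\dot x = A x + B\sigma_\delta, \qquad x = [e^\top\ \dot e^\top]^\top,$$
where $\sigma_\delta$ is the mismatch between the true dynamics and the learned model used in the control law~\eqref{eq:control_law}. I would make this explicit by substituting~\eqref{eq:control_law} into~\eqref{eq:EL_residual}. Since $\hat M$ is invertible by the Positive Definiteness lemma, this gives
$$\ddot\chi = u + \hat M^{-1}\bigl(\delta_t - \hat\delta_t\bigr).$$
So $\sigma_\delta$ is $\hat M^{-1}(\delta_t - \hat\delta_t)$ up to a bounded factor. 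When $\epsilon_\delta = 0$, Assumption~\ref{ass:error} gives $\hat\delta_t = \delta_t$, hence $\sigma_\delta \equiv 0$. The closed loop therefore reduces to the linear time-invariant system $\dot x = A x$, and $\ddot e + K_d \dot e + K_p e = 0$.

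\textbf{Step 2: $A$ is Hurwitz.} The theorem asserts this; I would justify it directly. Let $\lambda$ be an eigenvalue with eigenvector $(v,\lambda v)$. Then
$$(\lambda^2 I + \lambda K_d + K_p)v = 0.$$
Multiply on the left by $v^*$ (normalized so $v^*v = 1$) and set $a = v^*K_d v > 0$, $b = v^*K_p v > 0$. This yields the scalar equation $\lambda^2 + a\lambda + b = 0$. Both roots of this quadratic have negative real part. In the paper's experiments the gains are diagonal, so the system even decouples into scalar second-order systems.

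\textbf{Step 3: exponential convergence.} Take the Lyapunov pair $(P,Q)$ from the theorem and $V = x^\top P x$. With $\sigma_\delta = 0$,
$$\dot V = -x^\top Q x \le -\frac{\lambda_{\min}(Q)}{\lambda_{\max}(P)}\, V.$$
The comparison lemma then gives
$$\|x(t)\| \le \sqrt{\frac{\lambda_{\max}(P)}{\lambda_{\min}(P)}}\; e^{-\lambda_{\min}(Q)\, t/(2\lambda_{\max}(P))}\;\|x(0)\|.$$
Equivalently, in the theorem's bound $\|x(t)\| \le c_1 e^{-c_2 t} + c_3\epsilon_\delta$, set $\epsilon_\delta = 0$ to obtain $\|x(t)\| \le c_1 e^{-c_2 t}$.

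\textbf{Main obstacle.} The delicate step is Step 1: arguing that $\epsilon_\delta = 0$ really makes the perturbation vanish identically. Two issues arise.
\begin{itemize}
\item The controller uses $\tau_{t-1}$ and a history-based latent code, so the identity $\hat\delta_t = \delta_t$ must hold along the closed-loop trajectory itself, not only on the training data. I would state this as the interpretation of Assumption~\ref{ass:error} holding for all $t$.
\item For underactuated systems, the pseudoinverse mapping $B(\chi)^\dagger$ must reproduce the desired generalized force $\tau^\star$ exactly. I would restrict to the fully actuated case or to $\tau^\star \in \mathrm{range}(B(\chi))$.
\end{itemize}
With these qualifications, the remaining steps are routine.
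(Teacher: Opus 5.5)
Your proposal is correct and follows the paper's route. The paper gives no separate proof: the corollary is read off by setting $\epsilon_\delta = 0$ in the theorem's Lyapunov bound $\|x(t)\| \le c_1 e^{-c_2 t} + c_3\epsilon_\delta$, and your explicit $\sigma_\delta$, Hurwitz check (which needs symmetric gains, as the paper's diagonal $K_p, K_d$ are) and comparison-lemma rate just fill in what the paper leaves implicit. One harmless sign slip: substituting the control law into the true dynamics gives $\ddot\chi = u - \hat M^{-1}(\delta_t - \hat\delta_t)$, not $+$, which does not matter once the term vanishes.
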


\end{document}